\icmltitlerunning{Winograd Algorithm for AdderNet}
\begin{document}
\newtheorem{theorem}{Theorem}
\newenvironment{proof}{{\noindent\it Proof}\quad}{\hfill $\square$\par}

\twocolumn[
\icmltitle{Winograd Algorithm for AdderNet}



\icmlsetsymbol{equal}{*}

\begin{icmlauthorlist}
\icmlauthor{Wenshuo Li}{hw}
\icmlauthor{Hanting Chen}{hw,bj}
\icmlauthor{Mingqiang Huang}{cas}
\icmlauthor{Xinghao Chen}{hw}
\icmlauthor{Chunjing Xu}{hw}
\icmlauthor{Yunhe Wang}{hw}
\end{icmlauthorlist}

\icmlaffiliation{hw}{Noah's Ark Lab, Huawei Technologies.}
\icmlaffiliation{bj}{Peking University.}
\icmlaffiliation{cas}{Shenzhen Institutes of Advanced Technology, Chinese Academy of Sciences}

\icmlcorrespondingauthor{Wenshuo Li}{liwenshuo@huawei.com}
\icmlcorrespondingauthor{Yunhe Wang}{yunhe.wang@huawei.com}

\icmlkeywords{Machine Learning, ICML}

\vskip 0.3in
]



\printAffiliationsAndNotice{} 

\begin{abstract}
Adder neural network (AdderNet) is a new kind of deep model that replaces the original massive multiplications in convolutions by additions while preserving the high performance. Since the hardware complexity of additions is much lower than that of multiplications, the overall energy consumption is thus reduced significantly. To further optimize the hardware overhead of using AdderNet, this paper studies the winograd algorithm, which is a widely used fast algorithm for accelerating convolution and saving the computational costs. Unfortunately, the conventional Winograd algorithm cannot be directly applied to AdderNets since the distributive law in multiplication is not valid for the $\ell_1$-norm. Therefore, we replace the element-wise multiplication in the Winograd equation by additions and then develop a new set of transform matrixes that can enhance the representation ability of output features to maintain the performance. Moreover, we propose the $\ell_2$-to-$\ell_1$ training strategy to mitigate the negative impacts caused by formal inconsistency. Experimental results on both FPGA and benchmarks show that the new method can further reduce the energy consumption without affecting the accuracy of the original AdderNet.
\end{abstract}

\section{Introduction}
\label{sec:intro}
The effectiveness of deep neural networks has been well demonstrated in a large variety of machine learning problems. With the rapid development of the accessible datasets, learning theory and algorithms and the computing hardware, the performance of considerable computer vision tasks has been improved by these neural networks, especially convolutional neural networks (CNNs). \cite{krizhevsky2012imagenet} first applies the deep CNN on the large-scale image classification and a series of subsequent network architectures are explored for boosting the accuracy such as ResNet~\cite{he2016deep}, EfficientNet~\cite{tan2019efficientnet}, and GhostNet~\cite{ghostnet}. In addition, there is a great number of networks presented for addressing different task including object detection~\cite{tan2020efficientdet}, segmentation~\cite{tao2020hierarchical}, and low-level computer vision tasks~\cite{guo2019toward,zhang2018densely,ren2019progressive}. Although these models can obtain state-of-the-art performance, most of them require massive computations and cannot be easily used on portable devices such as microphones, robots and self-driving cars.

\begin{figure}[t]
\centering
\vspace{-5pt}
\includegraphics[width=3in]{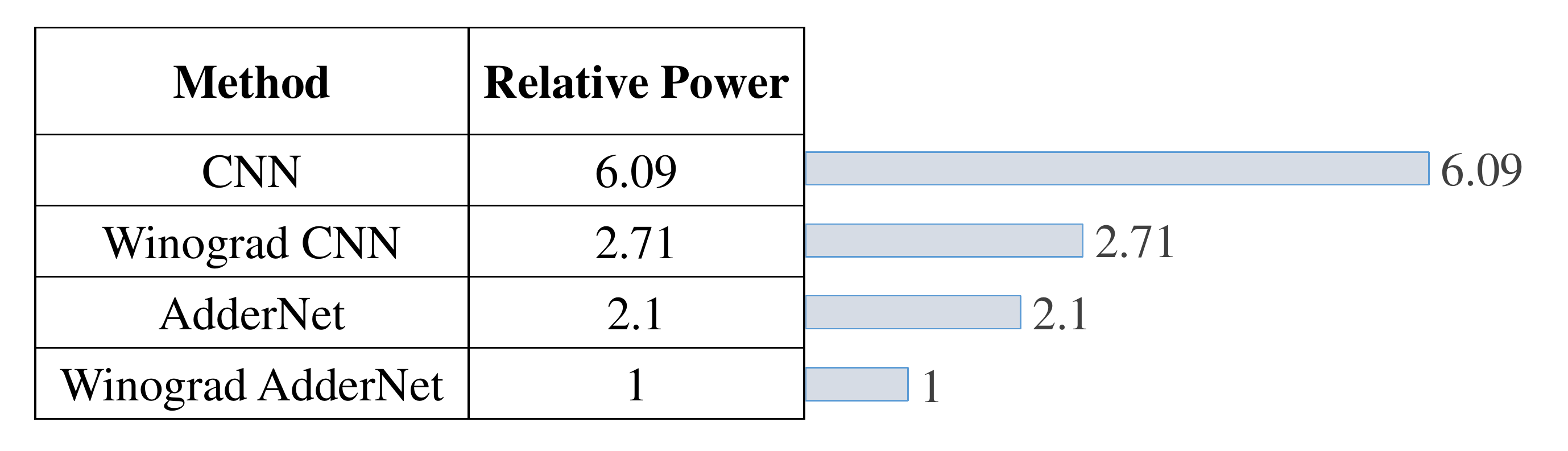}
\vspace{-10pt}
\caption{Comparison of relative power consuming between CNN, Winograd CNN, AdderNet and Winograd AdderNet. All data is achieved under 8-bit fixed-point number. *: The relative power of Winograd CNN is estimated by theoretical analysis.}
\vspace{-20pt}
\label{fig:power}
\end{figure}

To reduce the computational costs of pre-trained deep neural networks without affecting their performance is also a very important problem, a series of works have been explored for removing the network redundancy such as pruning~\cite{han2015learning}, distillation~\cite{hinton2015distilling}, and neural architecture search~\cite{liu2018darts}. Besides, according to the investigation in~\cite{dally2015high}, the energy consuming varies largely with different operations and different numeric precision (e.g., the energy consumption of an 32-bit multiplication is about 100$\times$ larger than that of an 8-bit addition). Therefore, quantization is now becoming the most common scheme for deploying deep neural networks on resource limited devices. \cite{qiu2016going} finds that 8-bit fixed-point number is sufficient for CNN to achieve a promising accuracy, and soon 8-bit fixed-point number becomes a common practice.

Furthermore, \cite{courbariaux2016binarized} proposes binary networks to quantize neural network to binary values (i.e., +1 and -1) to have an extreme simplification of deep networks. \cite{rastegari2016xnor} inserts a scale factor after each binarized layer to enhance the representational capability. \cite{lin2017towards} proposes ABC-Net to use the linear combination of binary bases to approximate floating-number weights and activations. \cite{liu2020reactnet} proposes RSign and RPReLU to learn the distribution reshape and shift for enhancing the performance. However, the main disadvantage of binary quantization is still the great loss of accuracy, e.g., the performance of the recent binary net is about 5\% lower than that of the baseline CNN with the same architecture on the ImageNet benchmark. Recently, \cite{addernet} proposes the adder neural network (AdderNet), which uses the conventional $\ell_1$-norm to calculate the output features. Since the cost of addition is much cheaper than that of multiplication (e.g., 8-bit addition is 7 times cheaper than 8-bit multiplication), AdderNet can significantly reduce the energy consumption of a given CNN model with comparable performance~\cite{xu2020kernel}.

Additionally, the fast calculation algorithms, including FFT~\cite{fft} and Winograd algorithm~\cite{winograd}, are widely used for improving the efficiency and reducing the computational complexity of deep neural networks. The Winograd algorithm is the most popular and effective method in acclerating CNNs~\cite{winograd-cnn}, since it has the best performance on accelerating $3\times 3$ layers, which is most commonly used in the modern neural architectures. Some following work focuses on the further optimization of the applications of Winograd algorithm for CNNs. To combine Winograd algorithm with neural network pruning, training in Winograd domain is proposed and the results show little loss of accuracy~\cite{liu2018efficient}.

Although the AdderNet can significantly reduce the overall energy cost of the resulting neural network, the energy consumption of convolutional layers could be obviously optimized by the Winograd algorithm as shown in Figure~\ref{fig:power}. Thus, we are motivated to explore the fast calculation algorithm for adder layers to further reduce the energy costs of using deep neural networks. However, due to distributive law is not applicable to the operation (i.e., sum of absolute values) in AdderNet, the conventional Winograd algorithm cannot be directly used. Therefore, we first thoroughly analyze the difficulties of applying the Winograd algorithm to AdderNet and explore a new paradigm for optimizing the inference of adder layers. The main contributions of this paper are summarized as follows:
\begin{itemize}
\vspace{-8pt}
\item We propose to inherit the original framework of the Winograd algorithm for optimizing AdderNet, and replace the original element-wise multiplication by adder operation, i.e., the $\ell_1$-norm for using additions.
\vspace{-5pt}
\item We then analyze the unbalance of feature maps in the Winograd for AdderNet, and investigate the optimal transform matrix for maximally enhancing the feature representation ability of the new output features. In addition, we present a $\ell_2$-to-$\ell_1$ training strategy to adapt the Winograd AdderNet paradigm and avoid the decline on the network performance. 
\vspace{-5pt}
\item Experiments conducted on benchmark datasets show that the performance of Winograd AdderNet is comparable to that of the baseline model, while achieving an about $2.1\times$ lower energy consumption on Field-Programmable Gate Array (FPGA).
\vspace{-5pt}
\end{itemize}


\section{Preliminaries}
\label{sec:related-work}
We briefly review the AdderNets and Winograd algorithm.
\subsection{AdderNet}
\label{ssec:addernet}
Different from convolutional neural networks, AdderNet~\cite{addernet} proposes to use $\ell_1$-norm to conduct the feed-forward process for extracting features. This method replaces multiplications with additions, which brings benefits for energy consumption and circuits area. The inference process is formulated as
\begin{equation}
\label{eq:adder}
\vspace{-3pt}
Y(m,n,t)=-\sum_{i,j,k}|F(i,j,k,t)-X(m+i,n+j,k)|,
\end{equation}
where $Y$ represents the output features, $F$ represents weights and $X$ represents input features. The backward process of weights $F$ and feature maps $X$ is approximated with $\ell_2$-norm and HardTanh instead of sign function, respectively.
\begin{equation}
\frac{\partial Y(m,n,t)}{\partial F(i,j,k,t)} = X(m+i,n+j,k)-F(i,j,k,t),
\end{equation}
\begin{equation}
\small \frac{\partial Y(m,n,t)}{\partial X(m+i,n+j,k)} = HT(F(i,j,k,t)-X(m+i,n+j,k)),
\end{equation}
where HT($\cdot$) is short for HardTanh function
$$
HT(\cdot)=
\begin{cases}
x,& -1 < x <1,\\
-1,& x < -1,\\
1,& x > 1.
\end{cases}
$$
Since the norms of gradients in AdderNet are smaller than that in CNNs, the authors propose an adaptive learning rate for different layers in AdderNets. The learning rate for each layer $l$ could be formulated by:
\begin{equation}
\Delta F_l =\gamma \times \alpha_l \times \Delta L(F_l),
\end{equation}
\begin{equation}
\label{eq:alr}
\alpha_l=\frac{\eta \sqrt{k}}{||\Delta L(F_l)||_2}.
\end{equation}
The following work expands the application scope, such as super-resolution~\cite{addersr}. AdderNet has shown its potential to replace CNN in many computer vision tasks and attracted a lot of attention.
\subsection{Winograd algorithm}
\label{ssec:winograd}
Winograd algorithm~\cite{winograd} is a widely used fast calculation method, which can accelerate the convolution calculation in the signal processing area. \cite{winograd-cnn} applies Winograd algorithm in convolutional neural networks and largely reduce the computation cost of CNNs. Denote the length of filter as $r$, the output length as $m$ and the corresponding Winograd algorithm as $F(m, r)$.

The Winograd algorithm of $F(2,3)$ can be formulated as
\begin{equation}
\label{eq:wino}
Y=A^T[[GgG^T]\odot[B^TdB]]A,
\end{equation}
\begin{equation*}
A = \left[
 \begin{matrix}
   1 & 0\\
   1 & 1\\
   1 & -1\\
   0 & -1
  \end{matrix}
  \right],
G = \left[
 \begin{matrix}
   1 & 0 & 0 \\
   \frac{1}{2} & \frac{1}{2} & \frac{1}{2} \\
   \frac{1}{2} & -\frac{1}{2} & \frac{1}{2} \\
   0 & 0 & 1
  \end{matrix}
  \right],
\end{equation*}
\begin{equation}
\label{eq:wino3}
B = \left[
 \begin{matrix}
   1 & 0 & 0 & 0 \\
   0 & 1 & -1 & 1 \\
   -1 & 1 & 1 & 0 \\
   0 & 0 & 0 & -1
  \end{matrix}
  \right].
\end{equation}
where $g$ represents the $3\times 3$ convolution filter, $d$ represents a $4\times 4$ tile of input feature map and $\odot$ respresents the element-wise multiplication. $A$, $G$ and $B$ are the transform matrix for output, weights and input, respectively. When forward process is performed, $\hat{g}=GgG^T$ could be calculated at advance to reduce the calculation overhead, since $g$ would not be changed. Therefore, Equation~(\ref{eq:wino}) can be reformulated as:
\begin{equation}
\label{eq:hatg}
\vspace{-3pt}
Y=A^T[\hat{g}\odot[B^TdB]]A.
\end{equation}
Considering the complexity of transformation and the numeric precision issues, $F(2\times 2,3\times 3)$ is the most commonly used form in practice~\cite{liu2018efficient, yan2020optimizing}. Moreover, with $m>2$ or $r>3$, the transformation matrix could not be binary, making it harder to be applied to AdderNets. Therefore, we only focus on $F(2\times 2,3\times 3)$ in the following sections.

\section{Method}
\subsection{Winograd Algorithm for AdderNet}
\label{sec:basic-form}
As AdderNet and Winograd algorithm can all improve the efficiency of neual networks, we explore to combine the two techniques together to further reduce the computation cost.

In this section, we will introduce the vanilla form of Winograd algorithm on AdderNet. As shown in Equation~(\ref{eq:hatg}), the Winograd algorithm consists of several parts of calculations, including pre-transformations for filters $GgG^T$, pre-transformations for inputs $B^TdB$, element-wise multiplications, and output transformations using matrix $A$. Since the calculations in input pre-transformations and output transformations only contain additions, we do not need to modify them. Therefore, we only replace the element-wise multiplications with $\ell_1$-distance, the calculations can be reformulated as:
\begin{equation}
\label{eq:winoadder}
Y=A^T[-|\hat{g}\ominus[B^TdB]|]A.
\end{equation}
$A$, $G$, and $B$ are the same as those we introduced in Section~\ref{ssec:winograd}. $\ominus$ represents element-wise minus operation. (Additions and minus operation are actually the same, since minus operation could be implemented by additions of complement.) And $|\cdot|$ represents the absolute operation for each element in the matrix.

Here we give a brief analysis of the complexity of Winograd algorithm for AdderNet. The calculation of Winograd algorithm consists of four parts: weight pre-transformations, input pre-transformations, element-wise minus and absolute operation of weights and output transformations. The transformation of weights could be calculated before deployment, so we do not take this part into account. We denote the shape of input features as $(N, C_{in}, X_h, X_w)$, and the shape of weights as $(C_{out}, C_{in}, K_h, K_w)$. The input features could be divided into $N\times C_{in}\times \frac{X_h}{2}\times \frac{X_w}{2}$ groups for applying Winograd algorithm. Each group requires 3 additions since each column and row of matrix $B$ has two non-zero values ($1$ or $-1$), which means the final $B^TdB$ results are the sum of four values. For the element-wise minus and absolute operation of weights and features, each group requires $16$ additions, and there are $N\times C_{out}\times C_{in}\times \frac{X_h}{2}\times \frac{X_w}{2}$ groups in total. Since the results of addition and absolute operation need to be accumulated, the times of additions should be doubled, which results in $N\times C_{out}\times C_{in}\times \frac{X_h}{2}\times \frac{X_w}{2}\times 16\times 2$ additions. The output features could be divided into $C_{out}\times X_h\times X_w$ groups and each group needs 8 additions since the matrix $A$ has 3 non-zero values each column. The total additions of three parts is
\begin{equation}
\label{eq:comp}
N\times \frac{X_h}{2}\times \frac{X_w}{2}\times (C_{out}\times C_{in}\times 16\times 2 + C_{in}\times 3 + C_{out}\times 8).
\end{equation}
Since the values of $C_{in}$ and $C_{out}$ are generally dozens or hundreds, last two items can be ignored. Then the formula~(\ref{eq:comp}) becomes
\vspace{-5pt}
\begin{equation}
N\times X_h\times X_w\times C_{out}\times C_{in}\times 8.
\end{equation}
The total additions of original AdderNet are
\vspace{-5pt}
\begin{equation}
N\times X_h\times X_w\times C_{in}\times C_{out}\times 9\times 2.
\end{equation}
Thus, the Winograd algorithm for AdderNet only requires about $\frac{4}{9}$ additions of original AdderNet.

However, since the absolute value is used in the calculation of AdderNet, the distributive law is not valid. So the Winograd form in Equation~(\ref{eq:winoadder}) is not equal to the original addition operation in Equation~(\ref{eq:adder}). Moreover, the accumulative absolute values put forward higher requirements for output transform matrix $A$. Let $X=-|\hat{g}\ominus[B^TdB]|$, and denote elements in $X$ and $Y$ as
$$
X = \left[
 \begin{matrix}
   x_0 & x_1 & x_2 & x_3 \\
   x_4 & x_5 & x_6 & x_7 \\
   x_8 & x_9 & x_{10} & x_{11} \\
   x_{12} & x_{13} & x_{14} & x_{15} \\
  \end{matrix}
  \right],
Y = \left[
 \begin{matrix}
   y_0 & y_1 \\
   y_2 & y_3 \\
  \end{matrix}
  \right].
$$
We expand the equation $Y=A^TXA$ and get
\begin{align*}
\vspace{-8pt}
y_0 &= x_0+x_1+x_2+x_4+x_5+x_6+x_8+x_9+x_{10},\\
y_1 &= x_1-x_2-x_3+x_5-x_6-x_7+x_9-x_{10}-x_{11},\\
y_2 &= x_4+x_5+x_6-x_8-x_9-x_{10}-x_{12}-x_{13}-x_{14},\\
y_3 &= x_5-x_6-x_7-x_9+x_{10}+x_{11}-x_{13}+x_{14}+x_{15}.
\vspace{-8pt}
\end{align*}
We can find that the number of additions and that of minus operations in each equation is not the same. The magnitude of each x is usually similar. Since all elements in X are negative, the magnitude of output features $Y$ is not consistent for each $y_i$. The unbalance of different positions obviously affect the performance of the network. In the next section, we will introduce our method to mitigate the influence of this two problem.

\subsection{Optimal Transform Matrix}
\label{sec:mod-of-trans}
As discussed above, if we apply the Winograd algorithm for AdderNets, the overall computational complexity during the inference can be reduced. In this section, we explore new transform matrixes to solve the feature unbalanced problem. There are two requirements of the transform matrixes.
\begin{itemize}
\vspace{-6pt}
\item The modified transform matrixes could balance the magnitude of all positions of output features $Y$.
\vspace{-2pt}
\item The output of Winograd algorithm transformed by modified matrixes should be equal to that of the original form in CNN.
\vspace{-6pt}
\end{itemize}
The first requirement is to ensure the output of Winograd AdderNet have a similar magnitude which can be properly handled by the following layers (Batchnorm and ReLU). The second requirement is to remain the basic characteristic of conventional Winograd algorithm and make it universal to CNNs. 
\begin{theorem}
The general solution of the Winograd form $F(2, 3)$ is
$$
Y=A^T[[GgG^T]\odot[B^TdB]]A.
$$
and the matrixes are
$$
A = \left[
 \begin{matrix}
   \alpha_0 & -\alpha_0 c_0\\
   \beta_0 & -\beta_0 c_1\\
   \gamma_0 & -\gamma_0 c_2 \\
   0 & \delta_0
  \end{matrix}
  \right],
$$
$$
G = \left[
 \begin{matrix}
   \frac{\alpha_1}{(c_1-c_0)(c_2-c_0)} & -\frac{\alpha_1c_0}{(c_1-c_0)(c_2-c_0)} & \frac{\alpha_1c_0^2}{(c_1-c_0)(c_2-c_0)} \\
   \frac{\beta_1}{(c_0-c_1)(c_2-c_1)} & -\frac{\beta_1c_1}{(c_0-c_1)(c_2-c_1)} & \frac{\beta_1c_1^2}{(c_0-c_1)(c_2-c_1)} \\
   \frac{\gamma_1}{(c_0-c_2)(c_1-c_2)} & -\frac{\gamma_1c_2}{(c_0-c_2)(c_1-c_2)} & \frac{\gamma_1c_2^2}{(c_0-c_2)(c_1-c_2)} \\
   0 & 0 & \delta_1
  \end{matrix}
  \right],
$$
$$
B = \left[
 \begin{matrix}
   \frac{c_1c_2}{\alpha_0\alpha_1} & \frac{c_0c_2}{\beta_0\beta_1} & \frac{c_0c_1}{\gamma_0\gamma_1} & \frac{c_0c_1c_2}{\delta_0\delta_1} \\
   \frac{c_1+c_2}{\alpha_0\alpha_1} & \frac{c_0+c_2}{\beta_0\beta_1} & \frac{c_0+c_1}{\gamma_0\gamma_1} & \frac{c_0c_1+c_0c_2+c_1c_2}{\delta_0\delta_1} \\
   \frac{1}{\alpha_0\alpha_1} & \frac{1}{\beta_0\beta_1} & \frac{1}{\gamma_0\gamma_1} & \frac{c_0+c_1+c_2}{\delta_0\delta_1} \\
   0 & 0 & 0 & \frac{1}{\delta_0\delta_1}
  \end{matrix}
  \right].
$$
in which $c_0$, $c_1$ and $c_2$ are arbitrary rational numbers, and $\alpha_i$, $\beta_i$, $\gamma_i$ and $\delta_i$, $i=0,1$ are arbitrary real numbers.
\end{theorem}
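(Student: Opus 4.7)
The plan is to derive the general parameterization via the Cook--Toom construction (equivalently, polynomial Chinese Remainder) and then classify all algebraic degrees of freedom. First I would reduce $F(2,3)$ to a polynomial multiplication: viewing the filter as $g(x)=g_0+g_1x+g_2x^2$ and the input tile as $d(x)=d_0+d_1x+d_2x^2+d_3x^3$, the two entries of the length-$2$ convolution output are specific coefficients of the product $g(x)d(x)$, so any scheme that recovers $g(x)d(x)\bmod M(x)$ for a degree-$4$ polynomial $M(x)$ together with the leading coefficient $g_2d_3$ is sufficient. Choosing $M(x)=(x-c_0)(x-c_1)(x-c_2)$ corresponds to three distinct finite evaluation nodes $c_0,c_1,c_2$, and the ``point at infinity'' is used to pick up $g_2d_3$.

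Second, I would make the evaluation and interpolation explicit. At each finite node $c_i$ one has $g(c_i)=[1,c_i,c_i^2]\,g$ and $d(c_i)=[1,c_i,c_i^2,c_i^3]\,d$, and at infinity one reads off leading coefficients; stacking these row-wise gives the ``basic'' $G$ and $B^T$. Reconstructing the convolution output from the four pointwise products via Lagrange interpolation gives the ``basic'' $A^T$, with Lagrange weights $1/\prod_{j\ne i}(c_i-c_j)$ explaining the denominators in the rows of $G$ in the statement. The entries of $B$ are then the elementary symmetric polynomials in the $c_j$ with $j\ne i$ (Vieta applied to the Lagrange basis polynomials), which matches the pattern $c_1c_2,\,c_1+c_2,\,1$ in its columns.

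Third, I would introduce the scaling freedoms that generate the entire family. For each of the four nodes one may replace $g(c_i)\mapsto\lambda_i\,g(c_i)$ and $d(c_i)\mapsto\mu_i\,d(c_i)$, provided the $i$-th interpolation weight is rescaled by $1/(\lambda_i\mu_i)$, and this is the \emph{only} freedom, because evaluation/interpolation at $m+r-1=4$ distinct nodes is an isomorphism $\mathbb{R}[x]_{<4}\to\mathbb{R}^4$ and so the remaining symmetry is just a diagonal change of basis on the four-dimensional ``evaluation space''. Renaming so that $\alpha_1,\alpha_0$ scale row $0$ of $G$ and row $0$ of $A$ respectively (and likewise $\beta_i,\gamma_i$ for nodes $c_1,c_2$, and $\delta_i$ for infinity), and forcing the $i$-th column of $B$ to absorb $1/(\lambda_i\mu_i)$, produces exactly the parameterization in the statement. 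The $2$D form $Y=A^T[(GgG^T)\odot(B^TdB)]A$ then follows from the $1$D identity by tensoring the algorithm along both spatial axes, since $\odot$ commutes with the row-wise and column-wise applications of $G$ and $B$.

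The main obstacle, I expect, is the uniqueness direction: showing that the displayed family truly exhausts all valid triples $(A,G,B)$, rather than merely containing the classical choice. This requires the isomorphism argument above, together with a careful accounting that the three free nodes $c_0,c_1,c_2$ and the eight free scalars $\alpha_i,\beta_i,\gamma_i,\delta_i$ (two per node) are independent and span all admissible solutions; the existence direction is then a routine verification of the Cook--Toom identity on the explicit matrices.
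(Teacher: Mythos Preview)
Your overall strategy---Cook--Toom/CRT with three finite nodes plus a point at infinity, Lagrange interpolation for the output transform, per-node diagonal rescalings for the free parameters, and tensoring for the $2$D form---is exactly the paper's approach; the paper phrases it as polynomial multiplication modulo $M(n)=\prod_i(n+c_i)$ and invokes the Chinese Remainder Theorem and the Extended Euclidean algorithm rather than Lagrange interpolation, but these are the same computation. One point worth noting: your polynomial setup is slightly off. You take $g(x)$ of degree $2$ and the \emph{input tile} $d(x)$ of degree $3$, so $g(x)d(x)$ has degree $5$ and cannot be pinned down by three finite nodes and one leading coefficient; you also call $M(x)$ ``degree $4$'' but then write it as a cubic. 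The paper sidesteps this by working in the transposed direction first: it convolves a length-$2$ sequence $y$ with the length-$3$ filter $g$ to obtain the length-$4$ sequence $d$, so $d(n)=y(n)g(n)$ has degree $3$, and three finite nodes plus the leading term $y_1g_2$ suffice. Only at the very end does it transpose the $1$D identity $d=B[Gg\odot Ay]$ into the correlation form $y=A^T[Gg\odot B^Td]$ used in CNNs. If you route your argument through this transpose the degrees line up and the rest of your plan goes through verbatim.

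On the uniqueness direction you are actually more careful than the paper: the paper simply inserts the scalars $\alpha_i,\beta_i,\gamma_i,\delta_i$ ``to maintain the generability of the solution'' and remarks that rows of $A,G$ may be permuted with the corresponding columns of $B$, without arguing that no further freedom exists. Your isomorphism argument (evaluation at $m+r-1$ distinct nodes is a linear isomorphism, so the residual symmetry is diagonal) is the right way to close that gap.
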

\begin{proof}
For the Winograd form F(2, 3) , denote input sequence $y$ and filter sequence $g$ with length two and three as $[y_0, y_1]$ and $[g_0, g_1, g_2]$, then the results of convolution operation $[d_0, d_1, d_2, d_3]$ would be
\vspace{-5pt}
\begin{align*}
(y*g)(\tau)&=\sum_{t,\tau -t>0}^3 y_tg_{\tau-t},\\
d_0=(y*g)(0)&=y_0g_0, d_1=(y*g)(1)=y_0g_1+y_1g_0,\\
d_2=(y*g)(2)&=y_1g_1+y_2g_0, d_3=(y*g)(3)=y_1g_2.
\end{align*}

The results of the convolution can be derived from the product of two discrete sequence polynomials $y(n)$ and $g(n)$.
\vspace{-10pt}
\begin{align*}
&y(n)=y_0+y_1n,g(n)=g_0+g_1n+g_2n^2,\\
&d(n)=y(n)g(n)\\
&=y_0g_0+(y_0g_1+y_1g_0)n+(y_0g_2+y_1g_1)n^2+y_1g_2n^3\\
&=(y*g)(0)+(y*g)(1)n+(y*g)(2)n^2+(y*g)(3)n^3.
\end{align*}
The coefficients of $n^{i}, i=0...3$ term in polynomials $d(n)$ are actually the results of the $i$th term in convolution, so we can get the results of convolution by calculating the polynomials. In order to solve the polynomial coefficients, we need to construct an equivalent transformation for $d(n)$. We divide $d(n)$ into two parts, mutual prime polynomial $M(n)$ and remainder $d'(n)$. The order of $M(n)$ is the same as that of $d(n)$ so that the coefficient of $M(n)$ is  Then the problem of solving polynomial coefficients is converted into the problem of solving remainders. Denote three relatively prime polynomials as $m_0(n)=a_0n+b_0$, $m_1(n)=a_1n+b_1$, $m_2(n)=a_2n+b_2$, in which $a_0, a_1, a_2$ are arbitrary non-zero integers and $b_0, b_1, b_2$ are arbitrary integers. Then we get
\vspace{-6pt}
\begin{align}
M(n) &= (a_0n+b_0)(a_1n+b_1)(a_2n+b_2)\\
&= a_0a_1a_2(n+c_0)(n+c_1)(n+c_2)\\
c_0&=b_0/a_0, c_1=b_1/a_1, c_2=b_2/a_2.
\end{align}
where $c_0$, $c_1$ and $c_2$ are arbitrary rational numbers, and
\begin{equation}
\label{eq:svalue}
d(n)=tM(n)+d'(n), t=y_1g_2.
\end{equation}
We use Extended Euclidean algorithm to solve the inverse elements $[(\frac{M(n)}{m_i(n)})^{-1}]_{m_i(n)}$. The results are $\frac{1}{(c_1-c_0)(c_2-c_0)}$, $\frac{1}{(c_0-c_1)(c_2-c_1)}$, $\frac{1}{(c_0-c_2)(c_1-c_2)}$ for $m_0(n)$, $m_1(n)$, $m_2(n)$, respectively. According to Chinese remainder theorem,
\begin{equation}
d'(n)=\sum_{i=0}^2d'_i(n)\frac{M(n)}{m_i(n)}[(\frac{M(n)}{m_i(n)})^{-1}]_{m_i(n)}.
\end{equation}
Substituted into Equation~(\ref{eq:svalue}) and then we get
\begin{align}
\label{eq:s}
d_0'(n)&=\frac{(y_0-c_0y_1)(c_0^2g_2-c_0g_1+g_0)}{(c_1-c_0)(c_2-c_0)},\\
d_1'(n)&=\frac{(y_0-c_1y_1)(c_1^2g_2-c_1g_1+g_0)}{(c_0-c_1)(c_2-c_1)},\\
d_2'(n)&=\frac{(y_0-c_2y_1)(c_2^2g_2-c_2g_1+g_0)}{(c_1-c_2)(c_0-c_2)},\\
t&=y_1g_2.
\label{eq:s1}
\end{align}
and the input transformation
\begin{align*}
&d(n)=tn^3+[d_0'(n)+d_1'(n)+d_2'(n)+(c_0+c_1+c_2)t]n^2\\
&+[(c_1+c_2)d_0'(n)+(c_0+c_2)d_1'(n)+(c_0+c_1)d_2'(n)\\
&+(c_0c_1+c_0c_2+c_1c_2)t]n\\
&+(c_1c_2d_0'(n)+c_0c_2d_1'(n)+c_0c_1d_2'(n)+c_0c_1c_2t].
\end{align*}
From Equation~(\ref{eq:s})-(\ref{eq:s1}), we have
$$
\left[
\begin{matrix}
\alpha_0\alpha_1d_0'(n) \\
\beta_0\beta_1d_1'(n) \\
\gamma_0\gamma_1d_2'(n) \\
\delta_0\delta_1t
\end{matrix}
\right] = A \cdot
\left[
\begin{matrix}
y_0 \\
y_1
\end{matrix}
\right]
\odot G \cdot
\left[
\begin{matrix}
g_0 \\
g_1 \\
g_2
\end{matrix}
\right],
A = \left[
 \begin{matrix}
   \alpha_0 & -\alpha_0 c_0\\
   \beta_0 & -\beta_0 c_1\\
   \gamma_0 & -\gamma_0 c_2 \\
   0 & \delta_0
  \end{matrix}
  \right].
$$
Since the division of $d_0'(n)$, $d_1'(n)$, $d_2'(n)$and $t$ is arbitrary, we add coefficients $\alpha_i$, $\beta_i$, $\gamma_i$ and $\delta_i$, $i=0,1$ to maintain the generability of the solution. Moreover, the order of rows in matrix $A$ and matrix $G$ can be swapped simultaneously, and the corresponding column in matrix $B$ should also be swapped. Then the coefficients of polynomial $s(x)$ can be represented with$[\alpha_0\alpha_1d_0'(n), \beta_0\beta_1d_1'(n), \gamma_0\gamma_1d_2'(n), \delta_0\delta_1t]$
$$
\left[
\begin{matrix}
d_0\\
d_1\\
d_2\\
d_3
\end{matrix}
\right] = B \cdot \left[
\begin{matrix}
\alpha_0\alpha_1d_0'(n)\\
\beta_0\beta_1d_1'(n)\\
\gamma_0\gamma_1d_2'(n)\\
\delta_0\delta_1t
\end{matrix}
\right].
$$
Now we get the convolution result $x = B[Gg\odot Ay]$. For the correlation operation we need in CNN, $s$ would be the input and $h$ would be the output, so we have $y = A^T[Gg \odot B^Td]$. Then the 1-D result is nested to itself to obtain the 2-D result $Y = A^T[GgG^T\odot B^TdB]A$.
\end{proof}
In order to reduce the amount of calculation during the inference process, the elements in matrix $A$ should be chosen from $0$, $1$, $-1$ to avoid shift or multiplication operations. So $c_0$, $c_1$ and $c_2$ could only be chosen from $0$, $-1$ and $1$, one of each. Without losing generality, we set $\alpha_1=-1$, $\delta_0=-1$ and other coefficients to $1$, then we get the standard Winograd algorithm for convolution like Equation~(\ref{eq:wino})-(\ref{eq:wino3}).

Denote that the number of $+1$ and $-1$ in matrix $A$ of column $i$ is $p_i$ and $k-p_i$, in which $k$ represents the total number of non-zero elements. According to the previous results, we have $k=3$ in all columns of matrix $A$.
\begin{theorem}
$\forall i,j,m,n$, the number of additions and minus operations in the calculations of output feature $Y_{i,j}$ and $Y_{m,n}$ would be equal respectively if and only if $p_i=p_j=p_m=p_n$.
\end{theorem}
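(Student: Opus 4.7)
The strategy is to translate the operation-counting claim into an elementary arithmetic identity on the column sign counts $p_i$, and then exploit the fact that $k = 3$ is odd.

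First I would expand $Y_{ij} = \sum_{a,b} A_{a,i} A_{b,j} X_{ab}$ and observe that the coefficient $A_{a,i} A_{b,j}$ equals $+1$ whenever the two entries are both nonzero with the same sign, $-1$ when they are both nonzero with opposite signs, and $0$ otherwise. Classifying the nine nonzero pairs $(a,b)$ in the support of column $i$ times the support of column $j$ by sign therefore yields exactly $N^+_{ij} = p_i p_j + (k - p_i)(k - p_j)$ additions and $N^-_{ij} = p_i(k - p_j) + (k - p_i) p_j$ minus operations in the assembly of $Y_{ij}$. Since $N^+_{ij} + N^-_{ij} = k^2$ is independent of $i,j$, the conjunction ``$N^+_{ij} = N^+_{mn}$ and $N^-_{ij} = N^-_{mn}$'' collapses to the single requirement $N^+_{ij} = N^+_{mn}$, which simplifies the remaining analysis.

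The sufficiency direction is then immediate: if $p_i = p_j = p_m = p_n = p$, both $N^+_{ij}$ and $N^+_{mn}$ reduce to $p^2 + (k - p)^2$ and likewise both $N^-$ counts agree. For necessity, I would specialize the hypothesis to two well-chosen index patterns over the two available columns. Comparing two diagonal entries gives $p_i^2 + (k - p_i)^2 = p_j^2 + (k - p_j)^2$, which factors as $(p_i - p_j)(p_i + p_j - k) = 0$. Comparing a diagonal with an off-diagonal gives $p_i^2 + (k - p_i)^2 = p_i p_j + (k - p_i)(k - p_j)$, which factors as $(p_i - p_j)(2 p_i - k) = 0$. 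If $p_i \neq p_j$, the first forces $p_i + p_j = k$ and the second forces $2 p_i = k$; with $k = 3$ the latter has no integer solution, yielding a contradiction and hence $p_i = p_j$ for every pair of column indices appearing in the quantifier.

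The main obstacle I anticipate is the bookkeeping in the first step: one has to be careful that ``additions'' and ``minus operations'' really correspond to coefficients $+1$ and $-1$ in the expanded sum (rather than to the signs of the $X_{ab}$ themselves, which are all negative in the Winograd AdderNet setting after the $-|\cdot|$), and one should verify that no trivial cancellation reduces the operation count below the naive tally given by $N^\pm_{ij}$. Once that interpretation is pinned down, the remainder of the argument is just the two factorizations above plus the parity remark that $k=3$ rules out the spurious case $2p_i = k$.
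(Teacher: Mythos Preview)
Your proposal is correct. The setup---counting the $+1$ coefficients in $Y_{ij}=\sum_{a,b}A_{a,i}A_{b,j}X_{ab}$ as $N^+_{ij}=p_ip_j+(k-p_i)(k-p_j)$ and noting $N^+_{ij}+N^-_{ij}=k^2$---matches the paper exactly, as does the sufficiency direction.

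Where you diverge is in the necessity argument. The paper writes the single relation
\[
k\bigl[(p_i+p_j)-(p_m+p_n)\bigr]=2(p_ip_j-p_mp_n)
\]
and then asserts that, because $i,j,m,n$ range freely, both bracketed quantities must vanish identically, whence $\{p_i,p_j\}=\{p_m,p_n\}$ as unordered pairs and arbitrariness forces all $p$'s equal. You instead specialize to two concrete comparisons (diagonal--diagonal and diagonal--off-diagonal), factor each as $(p_i-p_j)(p_i+p_j-k)=0$ and $(p_i-p_j)(2p_i-k)=0$, and close with the parity observation that $2p_i=k=3$ is impossible in integers. Your route is more explicit and does not rely on the unargued claim that $k\cdot A=2B$ forces $A=B=0$; the price is that you invoke $k$ odd, whereas the paper's line (once patched, e.g.\ by also using the symmetric constraint $(p_i-p_j)(2p_j-k)=0$) would work for any $k$. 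Either way the conclusion is the same, and your bookkeeping caveat about identifying the $\pm1$ coefficients with the counted operations is well placed.
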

\begin{proof}
$\forall i,j,m,n$, if the number of additions of $Y_{i,j}$ and $Y_{m,n}$ is the same, then we have
$$p_ip_j+(k-p_i)(k-p_j)=p_mp_n+(k-p_m)(k-p_n)$$
$$k[(p_i+p_j)-(p_m+p_n)]=2(p_ip_j-p_mp_n)$$
Since $i,j,m,n$ are all arbitrary, to ensure the equation always established, we have $(p_i+p_j)-(p_m+p_n)\equiv 0$ and $p_ip_j - p_mp_n\equiv 0$. That is to say, $p_i=p_m,p_j=p_n$ or $p_i=p_n,p_j=p_m$. Since $i,j,m,n$ are arbitrary, actually we have $p_i=p_j=p_m=p_n$.
\end{proof}
Based on the conclusions we deduce above, we can modify the coefficients $\alpha_i$, $\beta_i$, $\gamma_i$ and $\delta_i$, $i=0,1$ to let the number of $+1$ and $-1$ in every column of matrix $A$ keep the same. It is easy to draw the conclusion that there are only four matrixes $A_i, i=0...3$ which meets our requirements.
$$
A_0^T = \left[
 \begin{matrix}
   -1 & 1 & 1 & 0\\
   0 & 1 & -1 & 1
  \end{matrix}
  \right],
A_1^T = \left[
 \begin{matrix}
   -1 &-1 & 1 &0\\
   0 & -1 & -1 & 1
  \end{matrix}
  \right],
$$
$$
A_2^T = \left[
 \begin{matrix}
   1 & -1 & -1 & 0\\
   0 & -1 & 1 & -1
  \end{matrix}
  \right],
A_3^T = \left[
 \begin{matrix}
   1 & 1 & -1 & 0\\
   0 & 1 & 1 & -1
  \end{matrix}
  \right].
$$
Correspondingly, we can get the matrixes $G_i, i=0...3$.

\begin{table*}[h]
\centering
\caption{Results on CIFAR-10 and CIFAR-100 datasets}
\label{table:cifar}
\begin{tabular}{cccccc}
\hline
Model                      & Method    & \#Mul & \#Add & CIFAR-10 Accuracy & CIFAR-100 Accuracy \\ \hline
\multirow{3}{*}{ResNet-20} & Winograd CNN & 19.40M & 19.84M & 92.25\%  & 68.14\% \\ \cline{2-6}
					& AdderNet   & - & 80.74M     &   91.84\%       &    67.60\%       \\ \cline{2-6}
                           & Winograd AdderNet & - & 39.24M  &   91.56\%       &    67.96\%       \\ \hline
\multirow{3}{*}{ResNet-32}  & Winograd CNN & 31.98M & 32.74M & 93.29\% & 69.74\% \\ \cline{2-6}
					& AdderNet   & - & 137.36M     &    93.01\%      &    69.02\%        \\ \cline{2-6}
                           & Winograd AdderNet & - & 64.72M  &   92.34\%      &    69.87\%       \\ \hline
\end{tabular}
\vspace{-10pt}
\end{table*}
\subsection{Training with L2-to-L1 Distance}
\label{sec:training}
Although we can solve the feature unbalanced problem by optimal transform matrix proposed in the above section, the winograd form of AdderNet is still not equal to its original form. Here we introduce a training method to mitigate the gap of Winograd AdderNet and original AdderNet. According to the training strategy described in AdderNet~\cite{addernet}, the output of $\ell_2$-AdderNet can be regarded as a linear transformation of that in the conventional CNN. For Winograd algorithm, $\ell_2$-norm also means a better appoximation of multiplication, since $\ell_2$ is composed of squares and multications, i.e.,
\begin{align*}
Y&=A^T[-([GgG^T]\ominus[B^TdB])^2]A\\
&=A^T[(2[GgG^T]\odot[B^TdB]\ominus[GgG^T]^2\ominus[B^TdB]^2)]A
\end{align*}
However, $\ell_1$-norm is more hardware friendly than $\ell_2$-norm since it requires no multiplications. To improve the network accuracy and maintain the hardware efficiency, we propose to train AdderNet after applying the Winograd algorithm in an $\ell_2$-to-$\ell_1$ distance paradigm.

In the inference process, the adder layer is formulated as
\vspace{-10pt}
\begin{equation}
t=F(i,j,k,t)-X(m+i,n+j,k).
\end{equation}
\begin{equation}
Y(m,n,t)=-\sum_{i,j,k}(|t|)^p.
\end{equation}
The backward process is formulated as
\vspace{-10pt}
\begin{equation}
\frac{\partial Y(m,n,t)}{\partial X(m+i,n+j,k)} = p\cdot t^{p-1}\cdot sign(t).
\end{equation}
\begin{equation}
\frac{\partial Y(m,n,t)}{\partial F(i,j,k,t)} = p\cdot (-t)^{p-1}\cdot sign(-t).
\end{equation}
During the training process, we gradually reduce the exponent $p$ from 2 to 1. Then the forward and backward process finally calculated as followed,
\vspace{-10pt}
\begin{equation}
Y(m,n,t)=-\sum_{i,j,k}|t|
\end{equation}
\vspace{-10pt}
\begin{equation}
\frac{\partial Y(m,n,t)}{\partial X(m+i,n+j,k)} = sign(t).
\end{equation}
\begin{equation}
\frac{\partial Y(m,n,t)}{\partial F(i,j,k,t)} = sign(-t).
\end{equation}
To ensure the continuity of approximation, we do not apply the $\ell_2$ gradients for $F$ and HardTanh gradients for $X$ as AdderNet. The adaptive learning rate in Equation~(\ref{eq:alr}) proposed in AdderNet is adapted to stabilize the training process.

There are several strategies to reduce the exponent $p$. We denote the step of reduction as $s$.
\begin{itemize}
\vspace{-8pt}
\item \textbf{Training until converge and then reducing $p$.} Train network with cosine annealing learning rate until the learning rate close to 0. Then reduce $p$ with a certain step $s$ and restart the training process.
\vspace{-6pt}
\item \textbf{Reducing $p$ during the converge process.} Reduce $p$ every $k$ epoch of the training process and the step $s$ is set to $\frac{k}{epochs}$.
\vspace{-5pt}
\end{itemize}
We will give detailed analysis of different strategies in the experimental results.

Since the kernel transformation $\hat{g}=GgG^T$ in Winograd algorithm is not equivalent for AdderNet, we do not perform weight transformation during the training process. Instead, we directly train the weights in the Winograd domain. We also compare different ways to deal with weights in the ablation study part.
\begin{figure*}
\vspace{-8pt}
	\centering
	\begin{minipage}[b]{0.3\textwidth}
\centering
		\includegraphics[width=0.94\textwidth]{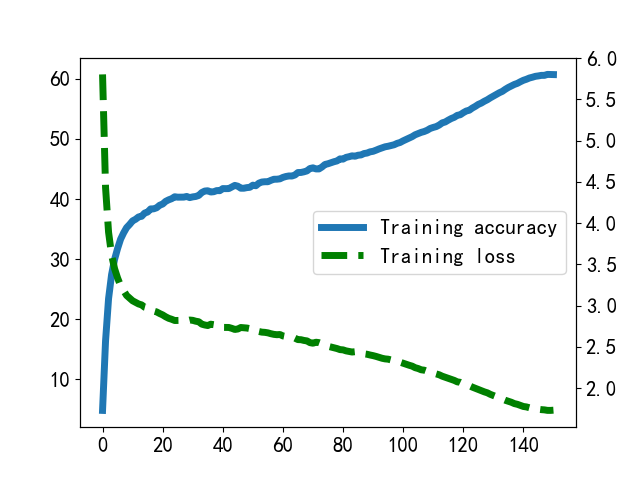} 
\label{fig:res18}
\vspace{-10pt}
\caption{Training accuracy and training loss of Winograd AdderNet ResNet-18 on ImageNet}
	\end{minipage}
\hspace{3pt}
	\begin{minipage}[b]{0.6\textwidth}
\centering
		\subfigure{
			\includegraphics[width=0.47\textwidth]{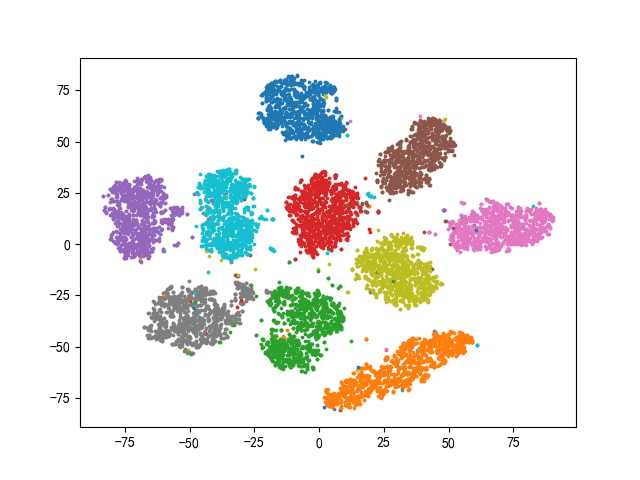}
			\includegraphics[width=0.47\textwidth]{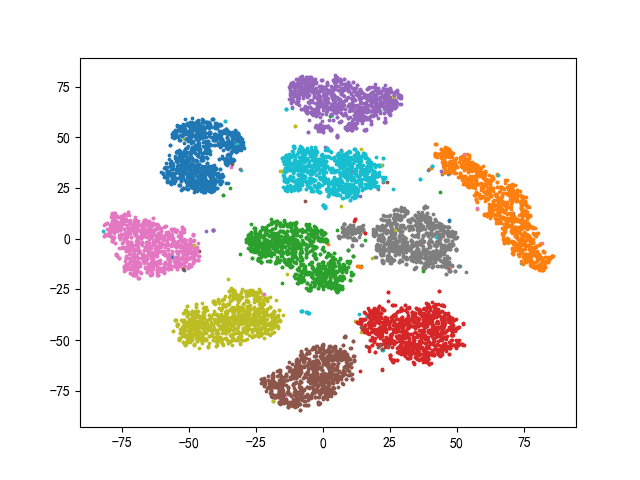}
			\label{fig:vis}
	}
\vspace{-10pt}
\caption{Dimension reduction results of features in Winograd for AdderNet (left) and original AdderNet (right). The visualization results are very close, which means that Winograd for AdderNet attracts the similar features to original AdderNet}
	\end{minipage}
\end{figure*}
\begin{figure*}[h]
\centering
\vspace{-13pt}
\subfigure[Input features]{
\label{fig:fea1}
\includegraphics[width=0.26\textwidth]{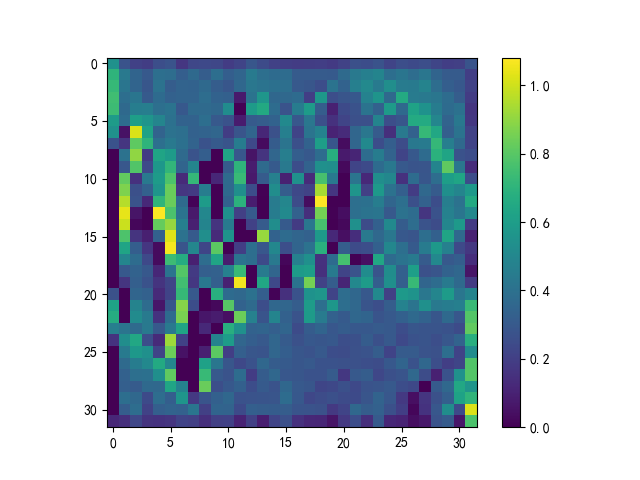}}
\subfigure[Output features with modified $A$]{
\label{fig:fea2}
\includegraphics[width=0.26\textwidth]{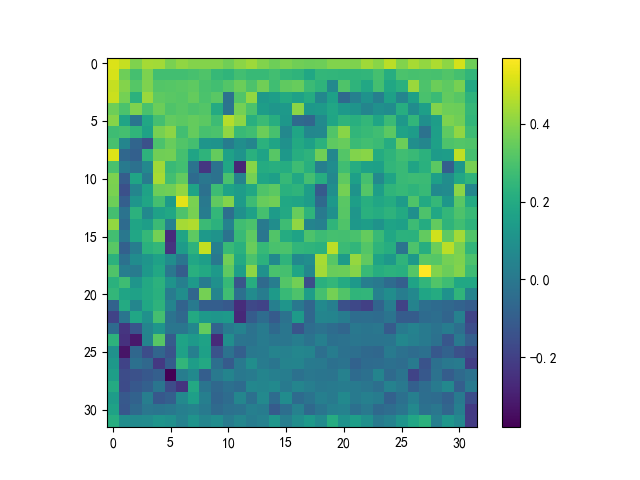}}
\subfigure[Output features with original $A$]{
\label{fig:fea3}
\includegraphics[width=0.26\textwidth]{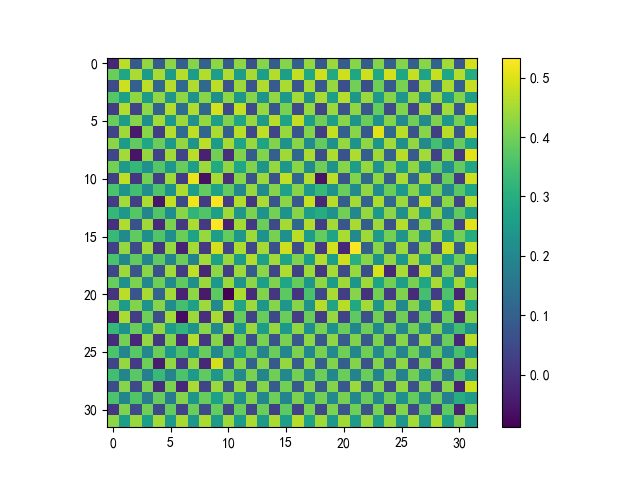}}
\vspace{-11pt}
\caption{Comparison of the feature heatmaps under different matrix $A$. Without the modified matrix $A$, there is a obvious grid in the heatmap shown in figure~\ref{fig:fea3}.}
\vspace{-11pt}
\label{fig:feature}
\end{figure*}

\vspace{-4pt}
\section{Experiments}
\label{sec:exp}
Here we conduct experiments to show the effectiveness of our proposed Winograd algorithm for AdderNet. The experiments are done on several commonly used datasets, including MNIST, CIFAR and ImageNet. We also make some ablation studies and give visualization of features to provide insights of our methods. All experiments are made via PyTorch on NVIDIA Tesla V100 GPU. For all experiments, we use the transform matrix $A_0$ and $G_0$, and other $A_i$ and $G_i$ matrixes can achieve the similar results.

\subsection{Classification}
\paragraph{Experiments on MNIST} First we evaluate our method on the MNIST dataset. To facilitate Winograd algorithm, we replace $5\times 5$ layers with $3\times 3$ layers in the original LeNet-5-BN~\cite{lecun1998gradient}. The detailed network structure is shown in the supplemental material. The learning rate is set to 0.1 at the beginning and decay with the cosine function in the following 100 epochs. We use SGD optimizer with momentum as 0.9, and the batch size is set as 256.

The original AdderNet achieves a $99.28\%$ accuracy while the Winograd AdderNet achieves $99.19\%$. Meanwhile, the Winograd AdderNet requires only 401.1M additions instead of 746.8M additions required by original AdderNet.
\vspace{-8pt}
\paragraph{Experiments on CIFAR}
We also evaluate Winograd AdderNet on the CIFAR dataset, including CIFAR-10 and CIFAR-100. The data settings are the same as~\cite{he2016deep}. The initial learning rate is set to 0.1 and then decays with a cosine learning rate schedule. The model is trained for 800 epochs and the training batch size is 256. The hyper-parameter $\eta$ in Equation~(\ref{eq:alr}) is set to 0.1.

To make fair comparison, we follow the settings in~\cite{addernet} to set the first and last layers as full-precision convolutional layers. The results are shown in table~\ref{table:cifar} and the results of CNN and AdderNet are from~\cite{addernet}. We only count the additions of adder part instead of the whole neural network. At the cost of little accuracy loss, the number of additions is reduced by more than 50\%.

\vspace{-12pt}
\paragraph{Experiments on ImageNet}
ImageNet is a large scale vision dataset which consists of $224\times 224$ pixel RGB images. We train ResNet-18 follow the original data settings in~\cite{he2016deep}. We train Winograd AdderNet on 8 GPUs with batch size 512, and the total training epochs are 150. The weight decay is set as $0.0001$ and the momentum is 0.9. The hyper-parameter $\eta$ is set to 0.05 for Winograd AdderNet. Experimental results are shown in Figure 2 and we use the AdderNet baseline from their paper~\cite{addernet}. Winograd AdderNet achieves a 66.2\% top-1 accuracy and an 86.8\% top-5 accuracy in ResNet-18, which is slightly less than AdderNet (67.0\% top-1/87.6\% top-5). If we extend the training epochs to 250, Winograd AdderNet achieved 66.5\% top-1 accuracy while AdderNet got no improvement. Besides, Winograd AdderNet uses only $1.72$G adder operations compared with $3.39$G in AdderNet.

\begin{table*}[ht]
\vspace{-15pt}
\centering
\begin{threeparttable}[b]
\caption{FPGA Simulation Results of original AdderNet and Winograd AdderNet}
\label{table:fpga}
\begin{tabular}{c|c|ccc}
\hline
Method                             & Module           & \#cycle    & Hardware Resource & Total Energy Consuming (Equivalent)\tnote{1} \\ \hline
original AdderNet                  & total      & 7062       & 7130              & 50.4M                               \\ \hline
\multirow{5}{*}{Winograd AdderNet} & padding          & 900        & 31                & 0.03M                               \\
                                   & input transform  & 3136       & 433               & 1.36M                               \\
                                   & calculation      & 3140       & 6900              & 21.7M                               \\
                                   & output transform & 3136       & 309               & 0.97M                               \\ \cline{2-5}
                                   & total            & -  & 7673              & \textbf{24.0M}                               \\ \hline
\end{tabular}
\begin{tablenotes}
     \item[1] \footnotesize Since the ratio of hardware resource usage is close to 100\%, we use the hardware resource overhead to approximate equivalent power consumption.
   \end{tablenotes}
  \end{threeparttable}
\vspace{-10pt}
\end{table*}

\begin{figure*}[h]
\centering
\vspace{-2pt}
\includegraphics[width=6in]{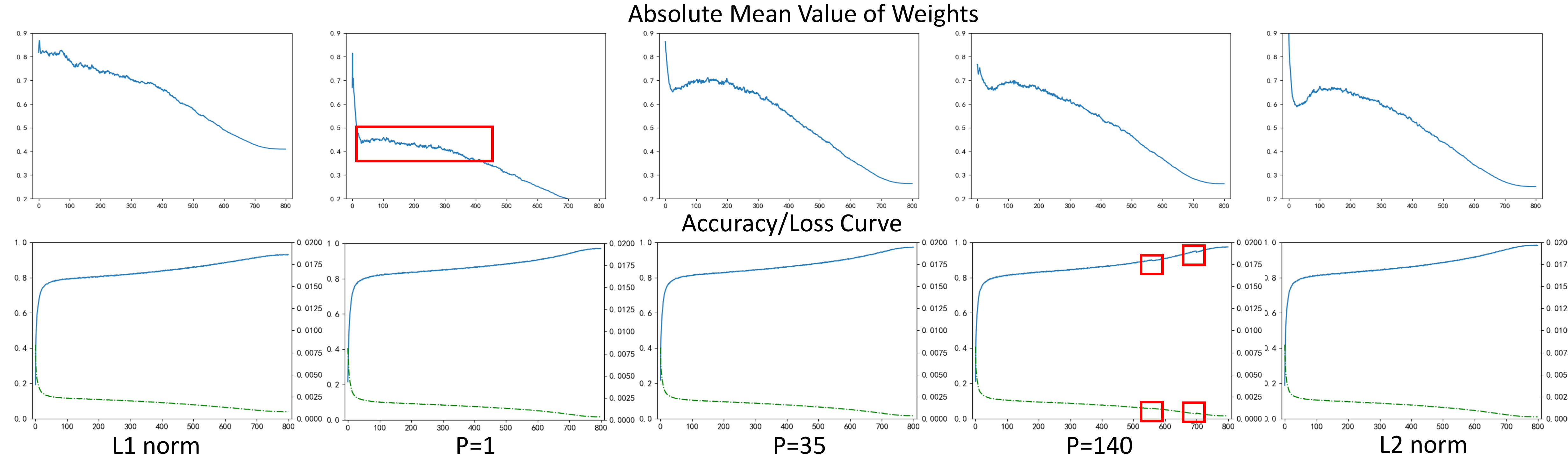}
\vspace{-10pt}
\caption{Upper: Trending of bsolute mean value of weights during training process. Lower: Training loss and accuracy.}
\vspace{-10pt}
\label{fig:training}
\end{figure*}

\begin{table}[htp]
\centering
\vspace{-13pt}
\caption{Ablation Study on the Reduction Method of $p$}
\label{table:reducep}
\begin{tabular}{c|c}
\hline
Method                  & Accuracy \\ \hline
Training until converge &   89.24       \\
Reducing during converge with $p=1$    &   90.94       \\
Reducing during converge with $p=35$       &  \textbf{91.56}        \\
Reducing during converge with $p=140$       &  91.44        \\ \hline
\end{tabular}
\vspace{-13pt}
\end{table}
\subsection{Ablation Study}
In this part, we evaluate the effectiveness of our proposed transform matrix and $\ell_2$-to-$\ell_1$ training method. All experiments in this section is done with ResNet-18 network on the CIFAR-10 dataset.

First we compare different methods to reduce parameter $p$ in the $\ell_2$-to-$\ell_1$ training strategy. The total training epochs are set to 800 to make fair comparison. The evaluation results are shown in Table~\ref{table:reducep}. We can find that reducing $p$ during the converge process with $p=35$ is the best.
We provide the curves of training loss (green lines) and accuracy (blue lines) in the upper figure of Figure~\ref{fig:training}. When $p$=140, the training process is unstable with an accuracy drop (-0.12\% as shown in Table~\ref{table:reducep}). In addition, the lower figure of Figure~\ref{fig:training} shows the values of weights using norm reduction with different settings. It is obvious that when $p$=35, the curve of weight norm is the most closed to that of using only $\ell_2$-norm, i.e., the network trained using our method can successfully approximate the $\ell_2$-norm wino AdderNet. Thus, we set $p$=35 in our experiments and obtain better performance.

\begin{table}[ht]
\centering
\vspace{-2pt}
\caption{Ablation Study on the Kernel Transformation}
\label{table:kernel}
\begin{tabular}{cc|c}
\hline
\multicolumn{2}{c|}{Method}                                                   & Accuracy \\ \hline
\multicolumn{2}{c|}{Training w/ KT}                                           &    89.19      \\ \cline{1-2}
\multicolumn{1}{c|}{Training} & Init Winograd kernel &     \textbf{91.56}     \\ \cline{2-2}
\multicolumn{1}{c|}{w/o KT}                                 & Init adder kernel and transform    &    91.28      \\ \hline
\end{tabular}
\vspace{-15pt}
\end{table}

We also show the comparison of three ways to deal with kernel transformation. The first way is the same as Winograd algorithm for convolution layers. We apply kernel transformation (KT) to weights during every inference process and update the origin $3\times 3$ kernel. The second way and the third way are to train the network in the Winograd domain. For these two ways, we initialize weights with normal distribution initialization for the $4\times 4$ Winograd kernel and for the $3\times 3$ original Adder kernel, respectively. The results are shown in Table~\ref{table:kernel}. Training with kernel transform has the worst performance, since the inconsistent transform makes the training harder. Other two ways have similar results, so we recommend to directly initialize Winograd kernel due to its convenience.

Next we evaluate the effectiveness of our proposed methods. The results are shown in Table~\ref{table:ablation}. Without any modification, the original Winograd algorithm only achieves 83.87\%. Modified transform matrix and our $\ell_2$-to-$\ell_1$ training strategy brings 4.38\% and 5.38\% accuracy improvement, respectively. Finally the combination achieves 91.56\%, which is comparable with that of AdderNet.

\begin{table}[ht]
\centering
\vspace{-10pt}
\caption{Ablation Study on Our Proposed Methods}
\label{table:ablation}
\begin{tabular}{cc|cc}
\hline
Mod $A$ & $\ell_2$-to-$\ell_1$-norm & CIFAR-10& CIFAR-100\\ \hline
             &                                                                    &   83.87\%      & 54.72\% \\
             &           $\surd$                                                         &  88.25\%    &  62.00\%  \\
     $\surd$         &                                                                    &   89.25\%   &  62.83\%  \\
     $\surd$         &           $\surd$                                                          &   \textbf{91.56\%}   &  \textbf{67.96\%}  \\ \hline
\end{tabular}
\vspace{-15pt}
\end{table}

\subsection{Visualization}
To intuitively perceive the effectiveness of Winograd for AdderNet, we visualize the features. We acquire the features of the last adder layer in our modified LeNet-5-BN, and then use t-SNE method to reduce the dimension to 2. The result of original AdderNet is shown in the left of Figure 3 and the result of Winograd for AdderNet is shown in the right. From the results, we can find that the two are extremely close, which means that Winograd for AdderNet attracts the similar features to original AdderNet.

Besides, we visualize the output features of Winograd Adder and original Adder layer, to intuitively display the effect of modifying matrix $A$. From Figure~\ref{fig:feature}, we can find that output features with original $A$ show a distinct grid while output features with modified $A$ do not show this phenomenon.

\subsection{FPGA simulation}
\label{sec:exp-fpga}
To evaluation the energy efficiency of our method in the runtime, we implement the Winograd algorithm for AdderNet and original AdderNet on FPGA.
The designed parallelism of calculation is 256, which means that 16 input channels and 16 output channels are calculated simultaneously.

We take a single layer with input shape $(N,$$C_{in}, X_h, X_w)=(1, 16, 28, 28)$ and kernel shape $(C_{out}, C_{in}, K_w, K_h)=(16,16,3,3)$ as an example. The comparison of Winograd AdderNet and original AdderNet is shown in Table~\ref{table:fpga}. We can find that Winograd AdderNet requires only $24.0/50.4\approx 47.6\%$ energy consuming of original AdderNet. As we analysed in Section~\ref{sec:basic-form}, the theoretical cost of Winograd AdderNet is $45.4\%$ of that of original AdderNet with $C_{in}=16$ and $C_{out}=16$. So our implementation validates this theoretical result. Moreover, with the pipeline technique, Winograd AdderNet may achieve about 50\% latency reduction (estimated).

\section{Conclusion}
\label{sec:conclusion}
In this paper, we propose the Winograd algorithm for AdderNet. We replace the element-wise multiplications in the Winograd equation with additions to further reduce the energy costs of CNN. To mitigate the accuracy loss brought by the replacement, we develop a set of new transform matrixes to balance output features and introduce an $\ell_2$-to-$\ell_1$ training method for the Winograd AdderNet paradigm. As a result, the proposed method reduces about 52.4\% energy consumption on our FPGA simulation while achieving similar performance with the original AdderNet, which would have a excellent prospects in future hardware design.

\nocite{langley00}

\bibliography{example_paper}

\begin{thebibliography}{27}
\providecommand{\natexlab}[1]{#1}
\providecommand{\url}[1]{\texttt{#1}}
\expandafter\ifx\csname urlstyle\endcsname\relax
  \providecommand{\doi}[1]{doi: #1}\else
  \providecommand{\doi}{doi: \begingroup \urlstyle{rm}\Url}\fi

\bibitem[Chen et~al.(2020)Chen, Wang, Xu, Shi, Xu, Tian, and Xu]{addernet}
Chen, H., Wang, Y., Xu, C., Shi, B., Xu, C., Tian, Q., and Xu, C.
\newblock Addernet: Do we really need multiplications in deep learning?
\newblock In \emph{CVPR}, pp.\  1468--1477, 2020.

\bibitem[Courbariaux et~al.(2016)Courbariaux, Hubara, Soudry, El-Yaniv, and
  Bengio]{courbariaux2016binarized}
Courbariaux, M., Hubara, I., Soudry, D., El-Yaniv, R., and Bengio, Y.
\newblock Binarized neural networks: Training deep neural networks with weights
  and activations constrained to+ 1 or-1.
\newblock \emph{arXiv preprint arXiv:1602.02830}, 2016.

\bibitem[Dally(2015)]{dally2015high}
Dally, W.
\newblock High-performance hardware for machine learning.
\newblock \emph{NIPS Tutorial}, 2, 2015.

\bibitem[Guo et~al.(2019)Guo, Yan, Zhang, Zuo, and Zhang]{guo2019toward}
Guo, S., Yan, Z., Zhang, K., Zuo, W., and Zhang, L.
\newblock Toward convolutional blind denoising of real photographs.
\newblock In \emph{CVPR}, pp.\  1712--1722, 2019.

\bibitem[Han et~al.(2020)Han, Wang, Tian, Guo, Xu, and Xu]{ghostnet}
Han, K., Wang, Y., Tian, Q., Guo, J., Xu, C., and Xu, C.
\newblock Ghostnet: More features from cheap operations.
\newblock In \emph{CVPR}, pp.\  1580--1589, 2020.

\bibitem[Han et~al.(2015)Han, Pool, Tran, and Dally]{han2015learning}
Han, S., Pool, J., Tran, J., and Dally, W.~J.
\newblock Learning both weights and connections for efficient neural networks.
\newblock \emph{arXiv preprint arXiv:1506.02626}, 2015.

\bibitem[He et~al.(2016)He, Zhang, Ren, and Sun]{he2016deep}
He, K., Zhang, X., Ren, S., and Sun, J.
\newblock Deep residual learning for image recognition.
\newblock In \emph{Proceedings of the IEEE conference on computer vision and
  pattern recognition}, pp.\  770--778, 2016.

\bibitem[Hinton et~al.(2015)Hinton, Vinyals, and Dean]{hinton2015distilling}
Hinton, G., Vinyals, O., and Dean, J.
\newblock Distilling the knowledge in a neural network.
\newblock \emph{arXiv preprint arXiv:1503.02531}, 2015.

\bibitem[Krizhevsky et~al.(2012)Krizhevsky, Sutskever, and
  Hinton]{krizhevsky2012imagenet}
Krizhevsky, A., Sutskever, I., and Hinton, G.~E.
\newblock Imagenet classification with deep convolutional neural networks.
\newblock \emph{Advances in neural information processing systems},
  25:\penalty0 1097--1105, 2012.

\bibitem[Lavin \& Gray(2016)Lavin and Gray]{winograd-cnn}
Lavin, A. and Gray, S.
\newblock Fast algorithms for convolutional neural networks.
\newblock In \emph{CVPR}, pp.\  4013--4021, 2016.

\bibitem[LeCun et~al.(1998)LeCun, Bottou, Bengio, and
  Haffner]{lecun1998gradient}
LeCun, Y., Bottou, L., Bengio, Y., and Haffner, P.
\newblock Gradient-based learning applied to document recognition.
\newblock \emph{Proceedings of the IEEE}, 86\penalty0 (11):\penalty0
  2278--2324, 1998.

\bibitem[Lin et~al.(2017)Lin, Zhao, and Pan]{lin2017towards}
Lin, X., Zhao, C., and Pan, W.
\newblock Towards accurate binary convolutional neural network.
\newblock In \emph{Advances in neural information processing systems}, pp.\
  345--353, 2017.

\bibitem[Liu et~al.(2018{\natexlab{a}})Liu, Simonyan, and Yang]{liu2018darts}
Liu, H., Simonyan, K., and Yang, Y.
\newblock Darts: Differentiable architecture search.
\newblock \emph{arXiv preprint arXiv:1806.09055}, 2018{\natexlab{a}}.

\bibitem[Liu et~al.(2018{\natexlab{b}})Liu, Pool, Han, and
  Dally]{liu2018efficient}
Liu, X., Pool, J., Han, S., and Dally, W.~J.
\newblock Efficient sparse-winograd convolutional neural networks.
\newblock \emph{arXiv preprint arXiv:1802.06367}, 2018{\natexlab{b}}.

\bibitem[Liu et~al.(2020)Liu, Shen, Savvides, and Cheng]{liu2020reactnet}
Liu, Z., Shen, Z., Savvides, M., and Cheng, K.-T.
\newblock Reactnet: Towards precise binary neural network with generalized
  activation functions.
\newblock \emph{arXiv preprint arXiv:2003.03488}, 2020.

\bibitem[Mathieu et~al.(2013)Mathieu, Henaff, and LeCun]{fft}
Mathieu, M., Henaff, M., and LeCun, Y.
\newblock Fast training of convolutional networks through ffts.
\newblock \emph{arXiv preprint arXiv:1312.5851}, 2013.

\bibitem[Qiu et~al.(2016)Qiu, Wang, Yao, Guo, Li, Zhou, Yu, Tang, Xu, Song,
  et~al.]{qiu2016going}
Qiu, J., Wang, J., Yao, S., Guo, K., Li, B., Zhou, E., Yu, J., Tang, T., Xu,
  N., Song, S., et~al.
\newblock Going deeper with embedded fpga platform for convolutional neural
  network.
\newblock In \emph{Proceedings of the 2016 ACM/SIGDA International Symposium on
  Field-Programmable Gate Arrays}, pp.\  26--35, 2016.

\bibitem[Rastegari et~al.(2016)Rastegari, Ordonez, Redmon, and
  Farhadi]{rastegari2016xnor}
Rastegari, M., Ordonez, V., Redmon, J., and Farhadi, A.
\newblock Xnor-net: Imagenet classification using binary convolutional neural
  networks.
\newblock In \emph{European conference on computer vision}, pp.\  525--542.
  Springer, 2016.

\bibitem[Ren et~al.(2019)Ren, Zuo, Hu, Zhu, and Meng]{ren2019progressive}
Ren, D., Zuo, W., Hu, Q., Zhu, P., and Meng, D.
\newblock Progressive image deraining networks: A better and simpler baseline.
\newblock In \emph{CVPR}, pp.\  3937--3946, 2019.

\bibitem[Song et~al.(2020)Song, Wang, Chen, Xu, Xu, and Tao]{addersr}
Song, D., Wang, Y., Chen, H., Xu, C., Xu, C., and Tao, D.
\newblock Addersr: Towards energy efficient image super-resolution.
\newblock \emph{arXiv preprint arXiv:2009.08891}, 2020.

\bibitem[Tan \& Le(2019)Tan and Le]{tan2019efficientnet}
Tan, M. and Le, Q.~V.
\newblock Efficientnet: Rethinking model scaling for convolutional neural
  networks.
\newblock \emph{arXiv preprint arXiv:1905.11946}, 2019.

\bibitem[Tan et~al.(2020)Tan, Pang, and Le]{tan2020efficientdet}
Tan, M., Pang, R., and Le, Q.~V.
\newblock Efficientdet: Scalable and efficient object detection.
\newblock In \emph{CVPR}, pp.\  10781--10790, 2020.

\bibitem[Tao et~al.(2020)Tao, Sapra, and Catanzaro]{tao2020hierarchical}
Tao, A., Sapra, K., and Catanzaro, B.
\newblock Hierarchical multi-scale attention for semantic segmentation.
\newblock \emph{arXiv preprint arXiv:2005.10821}, 2020.

\bibitem[Winograd(1980)]{winograd}
Winograd, S.
\newblock \emph{Arithmetic complexity of computations}, volume~33.
\newblock Siam, 1980.

\bibitem[Xu et~al.(2020)Xu, Xu, Chen, Zhang, Xu, and Wang]{xu2020kernel}
Xu, Y., Xu, C., Chen, X., Zhang, W., Xu, C., and Wang, Y.
\newblock Kernel based progressive distillation for adder neural networks.
\newblock \emph{arXiv preprint arXiv:2009.13044}, 2020.

\bibitem[Yan et~al.(2020)Yan, Wang, and Chu]{yan2020optimizing}
Yan, D., Wang, W., and Chu, X.
\newblock Optimizing batched winograd convolution on gpus.
\newblock In \emph{Proceedings of the 25th ACM SIGPLAN Symposium on Principles
  and Practice of Parallel Programming}, pp.\  32--44, 2020.

\bibitem[Zhang \& Patel(2018)Zhang and Patel]{zhang2018densely}
Zhang, H. and Patel, V.~M.
\newblock Densely connected pyramid dehazing network.
\newblock In \emph{Proceedings of the IEEE conference on computer vision and
  pattern recognition}, pp.\  3194--3203, 2018.

\end{thebibliography}
\bibliographystyle{icml2021}

\end{document}